\documentclass{article}

\usepackage{microtype}
\usepackage{graphicx}
\usepackage{subcaption}
\usepackage{booktabs} 

\usepackage{hyperref}

\usepackage[preprint]{icml2026}

\usepackage{amsmath}
\usepackage{amssymb}
\usepackage{mathtools}
\usepackage{amsthm}
\usepackage{multirow}
\usepackage{makecell}
\usepackage{subcaption}
\usepackage{amssymb}
\usepackage{tabularx}
\usepackage{array}
\usepackage[table]{xcolor}
\definecolor{lightgray}{gray}{0.9}

\usepackage[capitalize,noabbrev]{cleveref}

\theoremstyle{plain}
\newtheorem{theorem}{Theorem}[section]

\theoremstyle{definition}

\theoremstyle{remark}

\usepackage[textsize=tiny]{todonotes}

\begin{document}

\twocolumn[
  \icmltitle{VMF-GOS: Geometry-guided virtual Outlier Synthesis \\for Long-Tailed OOD Detection}



  \icmlsetsymbol{equal}{*}

  \begin{icmlauthorlist}
    \icmlauthor{Ningkang Peng}{sch}
    \icmlauthor{Qianfeng Yu}{sch}
    \icmlauthor{Yuhao Zhang}{sch}
    \icmlauthor{Yafei Liu}{sch}
    \icmlauthor{Xiaoqian Peng}{sch2}
    \icmlauthor{Peirong Ma}{sch}
    \icmlauthor{Yi Chen}{sch}
    \icmlauthor{Peiheng Li}{sch}
    \icmlauthor{Yanhui Gu}{sch}
  \end{icmlauthorlist}

  \icmlaffiliation{sch}{School of Computer and Electronic Information, Nanjing Normal University, China}
  \icmlaffiliation{sch2}{School of Artificial Intelligence and Information Technology, Nanjing University of Chinese Medicine, China}

  \icmlcorrespondingauthor{Peirong Ma}{prma@njnu.edu.cn}
  \icmlcorrespondingauthor{Yi Chen}{cs\_chenyi@njnu.edu.cn}
  \icmlcorrespondingauthor{Peiheng Li}{leees@nnu.edu.cn}
  \icmlcorrespondingauthor{Yanhui Gu}{gu@njnu.edu.cn}

  \icmlkeywords{Out of Distribution}

  \vskip 0.3in
]



\printAffiliationsAndNotice{}  

\begin{abstract}
Out-of-Distribution (OOD) detection under long-tailed distributions is a highly challenging task because the scarcity of samples in tail classes leads to blurred decision boundaries in the feature space. Current state-of-the-art (sota) methods typically employ Outlier Exposure (OE) strategies, relying on large-scale real external datasets (such as 80 Million Tiny Images) to regularize the feature space. However, this dependence on external data often becomes infeasible in practical deployment due to high data acquisition costs and privacy sensitivity. To this end, we propose a novel data-free framework aimed at completely eliminating reliance on external datasets while maintaining superior detection performance. We introduce a Geometry-guided virtual Outlier Synthesis (GOS) strategy that models statistical properties using the von Mises-Fisher (vMF) distribution on a hypersphere. Specifically, we locate a low-likelihood annulus in the feature space and perform directional sampling of virtual outliers in this region. Simultaneously, we introduce a new Dual-Granularity Semantic Loss (DGS) that utilizes contrastive learning to maximize the distinction between in-distribution (ID) features and these synthesized boundary outliers. Extensive experiments on benchmarks such as CIFAR-LT demonstrate that our method outperforms sota approaches that utilize external real images.



\end{abstract}
\section{Introduction}
The deployment of deep neural networks (DNNs) in real-world scenarios is increasingly challenged by the simultaneous presence of Long-Tailed Recognition (LTR) and Out-of-Distribution (OOD) \cite{peng,ijcai}. The former arises from extreme class imbalance, which biases models toward head classes while neglecting tail classes; the latter stems from unknown samples in open-world environments, often leading to overconfident but erroneous predictions. In safety-critical applications such as autonomous driving, collaboratively addressing these dual challenges is of paramount importance.

A fundamental bottleneck in long-tailed OOD detection lies in the ambiguity of decision boundaries within the feature space. Due to the scarcity of tail-class samples, their representative features are frequently encroached upon by head classes, leading to severe confusion with OOD samples. Current sota methods (e.g., PASCL\cite{wang2022pascl}, PATT\cite{he2025patt}) typically incorporate Outlier Exposure (OE) strategies, which rely on large-scale surrogate OOD datasets like 80 Million Tiny Images to regularize the feature space. However, the high costs of data acquisition and the sensitivity of data privacy significantly hinder the practical deployment of these methods in restricted or specialized environments.

\begin{figure}[t!]
  \centering 
  \begin{subfigure}[b]{0.49\columnwidth}
    \includegraphics[width=\linewidth]{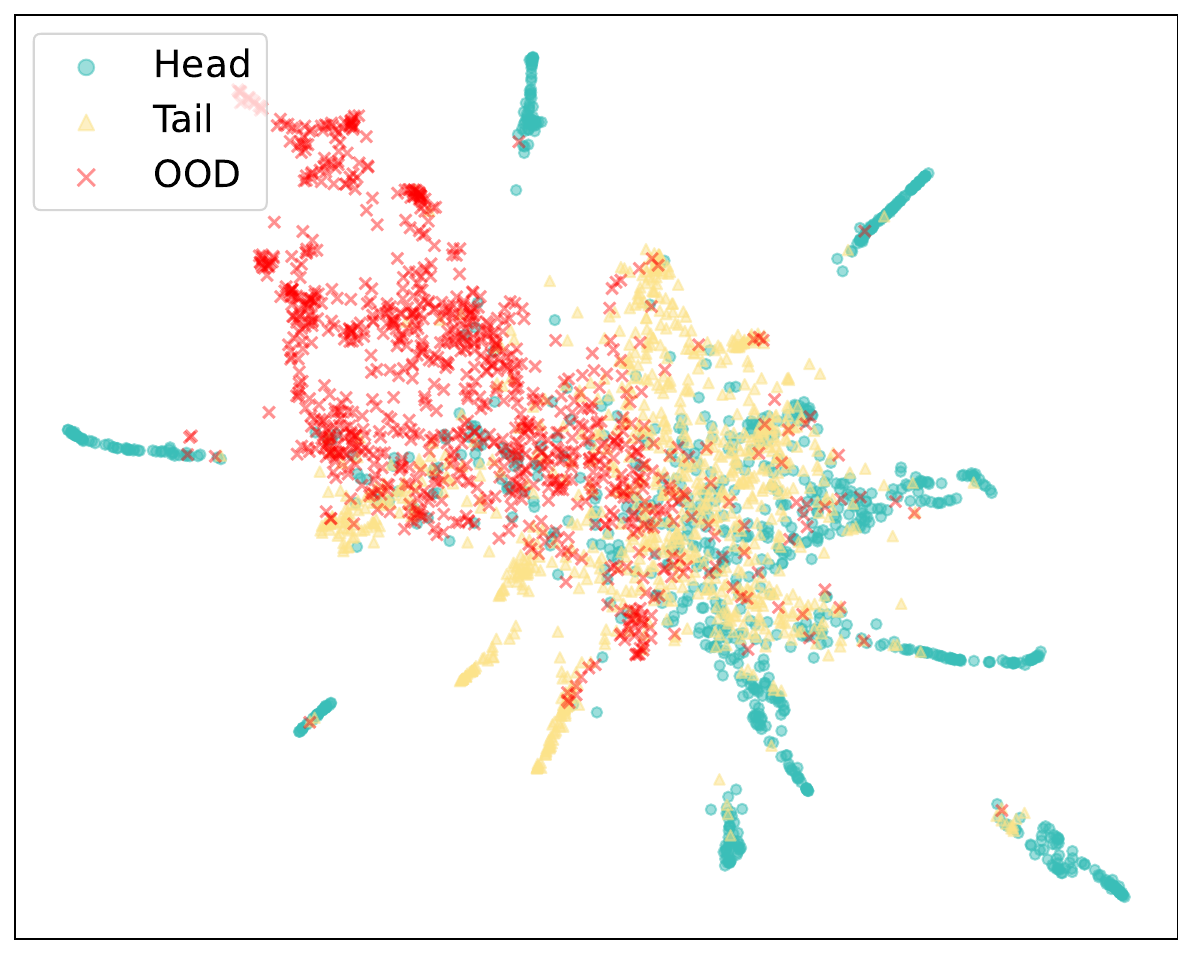}
    \caption{DARL}
    \label{fig:umap_darl}
  \end{subfigure}%
  \hfill 
  \begin{subfigure}[b]{0.49\columnwidth} 
    \includegraphics[width=\linewidth]{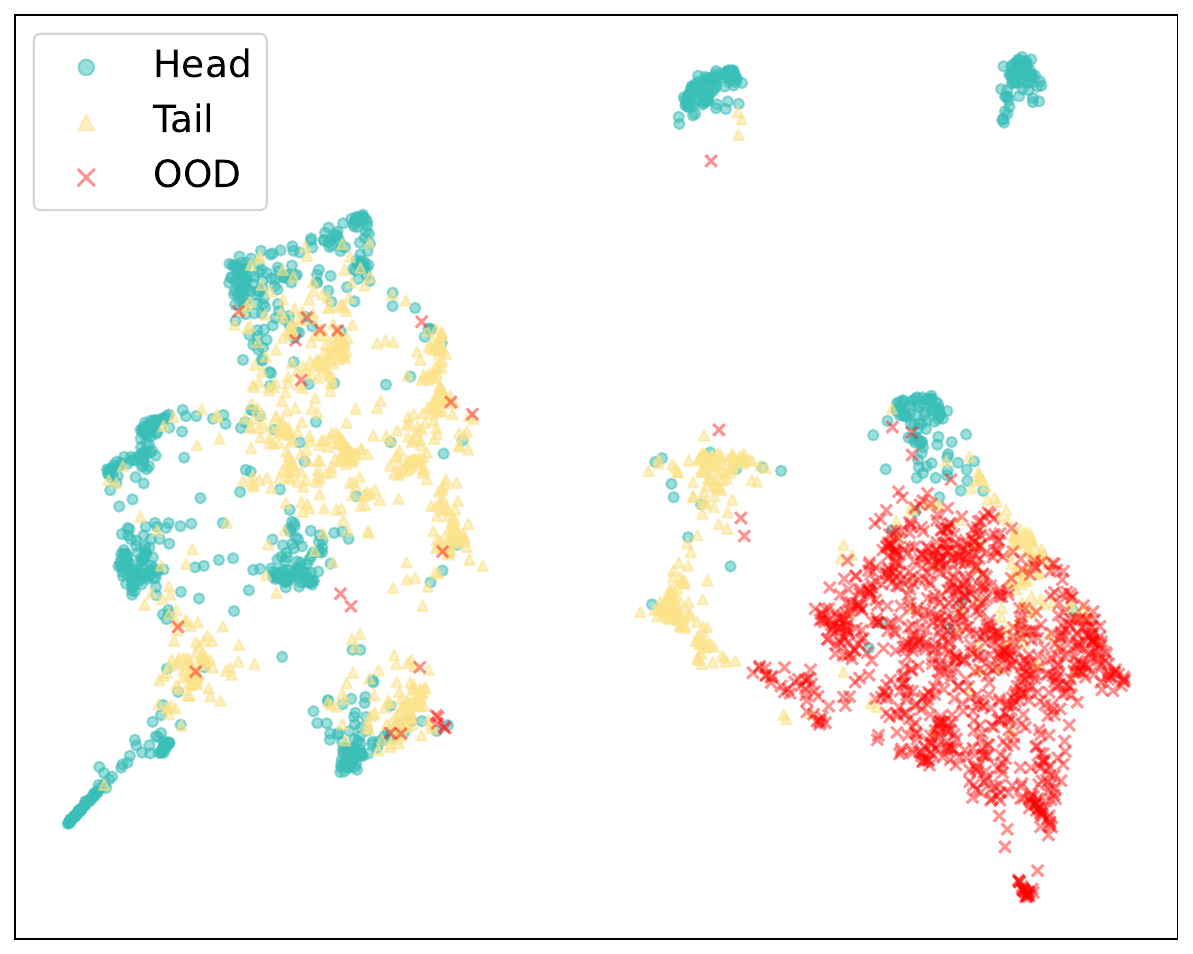}
    \caption{GOS}
    \label{fig:umap_gos}
  \end{subfigure}
  \caption{\textbf{UMAP visualization of feature manifolds on CIFAR-100 with LSUN as the OOD dataset}. (a) The DARL baseline exhibits substantial overlap between tail classes and OOD samples, leading to blurred decision boundaries. (b) Our GOS method fosters structural compactness within ID clusters and establishes a distinct separation margin for OOD detection.}
  \label{fig:feature_manifold_viz}
\end{figure}

While existing methods like DARL\cite{zhang2025darl} attempt to generate pseudo-OOD samples through data mixing (such as Mixup\cite{yun2019cutmix} and CutMix\cite{yun2019cutmix}) to eliminate reliance on external data, this augmentation-based synthesis strategy exhibits significant deficiencies when handling long-tailed distributions. As illustrated in Figure~\ref{fig:umap_darl}, because the pseudo-OOD samples generated by Mixup and CutMix possess strong ID-like attributes rather than being truly distributed in the low-likelihood regions of the ID data, they inevitably suffer from substantial overlap with the manifolds of under-represented tail classes. This results in decision boundaries that are blurred rather than compact, making it difficult for the model to distinguish between high-uncertainty tail classes and genuine OOD samples.

To address this core challenge, we propose the VMF-GOS framework. Our core design philosophy is to eliminate the dependence on external outlier datasets by mining endogenous geometric distribution knowledge to synthesize virtual outliers in low-likelihood regions. Specifically, we introduce two key components for long-tailed OOD detection:
(1)\textit{Geometry-guided virtual Outlier Synthesis(GOS).} Leveraging the asymptotic equivalence between high-dimensional hyperspherical similarity and the Chi-square ($\chi^2$) distribution, we precisely locate the low-likelihood annulus in the feature space for directional virtual outlier sampling. 
(2)\textit{Dual-Granularity Semantic Loss(DGS).} Diverging from prior works like PATT\cite{he2025patt}, which utilize von Mises-Fisher (vMF) distributions primarily for intra-class implicit semantic augmentation to balance ID data, we incorporate synthesized virtual outliers as virtual negative classes into a contrastive learning framework to enforce more compact class-boundary representations.

As illustrated in Figure~\ref{fig:umap_gos}, our proposed VMF-GOS framework not only fosters structural compactness within ID clusters but also establishes a distinct separation margin between tail classes and OOD regions, fundamentally resolving the issue of blurred decision boundaries.

Our primary contributions are summarized as follows:
\begin{itemize}
    \item We propose the GOS strategy. Without requiring any external data, GOS achieves targeted virtual sampling in low-likelihood annular regions of the feature space, thereby enabling a precise characterization of decision boundaries.
    \item We design a DGS loss for the vMF-based synthesis framework. By coupling the implicit semantic expectations of the vMF distribution with explicit boundary constraints of synthesized outliers, DGS effectively enhances the compactness of ID regions under long-tailed distributions.
    \item Extensive experiments on datasets such as CIFAR-LT demonstrate that VMF-GOS significantly outperforms existing methods across key metrics, even surpassing OE strategies that rely on large-scale external real-world images.
\end{itemize}

\section{Related Work}
\textbf{Long-Tailed Recognition.}
LTR aims to mitigate the challenges posed by the extreme class imbalance in training data. Conventional methodologies can be broadly categorized into data-driven and algorithm-driven paradigms. The former encompasses resampling strategies\cite{feng2021ebl,wang2019dcl,shi2023howresampling} and explicit or implicit data augmentation techniques \cite{wang2019isda,chen2022risda,du2024proco}. The latter focuses on alleviating class bias through reweighting \cite{cui2019cbloss,lin2017focalloss} or remargining (logit adjustment) \cite{menon2020logitadjustment,tao2023localandgloballa}. Although these LTR methods achieve remarkable performance in closed-set classification, their efficacy remains limited in real-world open-set scenarios due to the lack of specialized designs for OOD samples.
\newline
\textbf{OOD Detection.}
The objective of OOD detection \cite{nguyen2015ooddetection} is to distinguish whether an input sample belongs to an ID class or an OOD category. Several studies have explored extrapolating OOD feature distributions via adaptive sampling or virtual OOD synthesis \cite{du2022vos,tao2023npos,zhu2023divoe} to construct more precise decision boundaries. Furthermore, post-hoc strategies such as MSP\cite{hendrycks2016msp}, EnergyOE\cite{liu2020energy}, and ODIN\cite{liang2017odin} are frequently integrated to boost performance. However, most existing OOD detectors assume a balanced ID class distribution. Under long-tailed distributions, these models often suffer from bias, leading to the misidentification of rare tail-class ID samples as OOD or the misclassification of OOD samples into frequent head-class ID categories.
\newline
\textbf{Long-Tailed OOD Detection.}
To address the challenges of OOD detection in long-tailed scenarios, PASCL\cite{wang2022pascl} first introduced asymmetric contrastive learning to enlarge the margin between tail classes and OOD data in the feature space. 
EAT\cite{wei2024eat} introduced multiple abstention classes and utilized Cutmix to fuse tail-class data with OOD backgrounds, thereby forcing the model to focus on foreground features. 
COCL\cite{miao2024cocl} further integrated a neighborhood sparsity selection mechanism with calibrated outlier class learning to resolve the confusion between OOD and ID classes at the output layer. 
PATT\cite{he2025patt} models ID semantics through vMF mixture distributions and generates infinite contrastive pairs for semantic enhancement. However, it still requires external OOD samples during training and post-hoc calibration to maintain detection robustness. 
DARL\cite{zhang2025darl} resolves ID-OOD gradient conflicts using conflict-free pseudo-OOD data and ambiguity-aware logit adjustment.
Despite significant progress, how to bolster OOD detection robustness while preventing the erosion of head-class semantic information and maintaining high ID classification accuracy remains an unresolved challenge.




\begin{figure*}[t]
    \centering
    \includegraphics[width=0.99\textwidth]{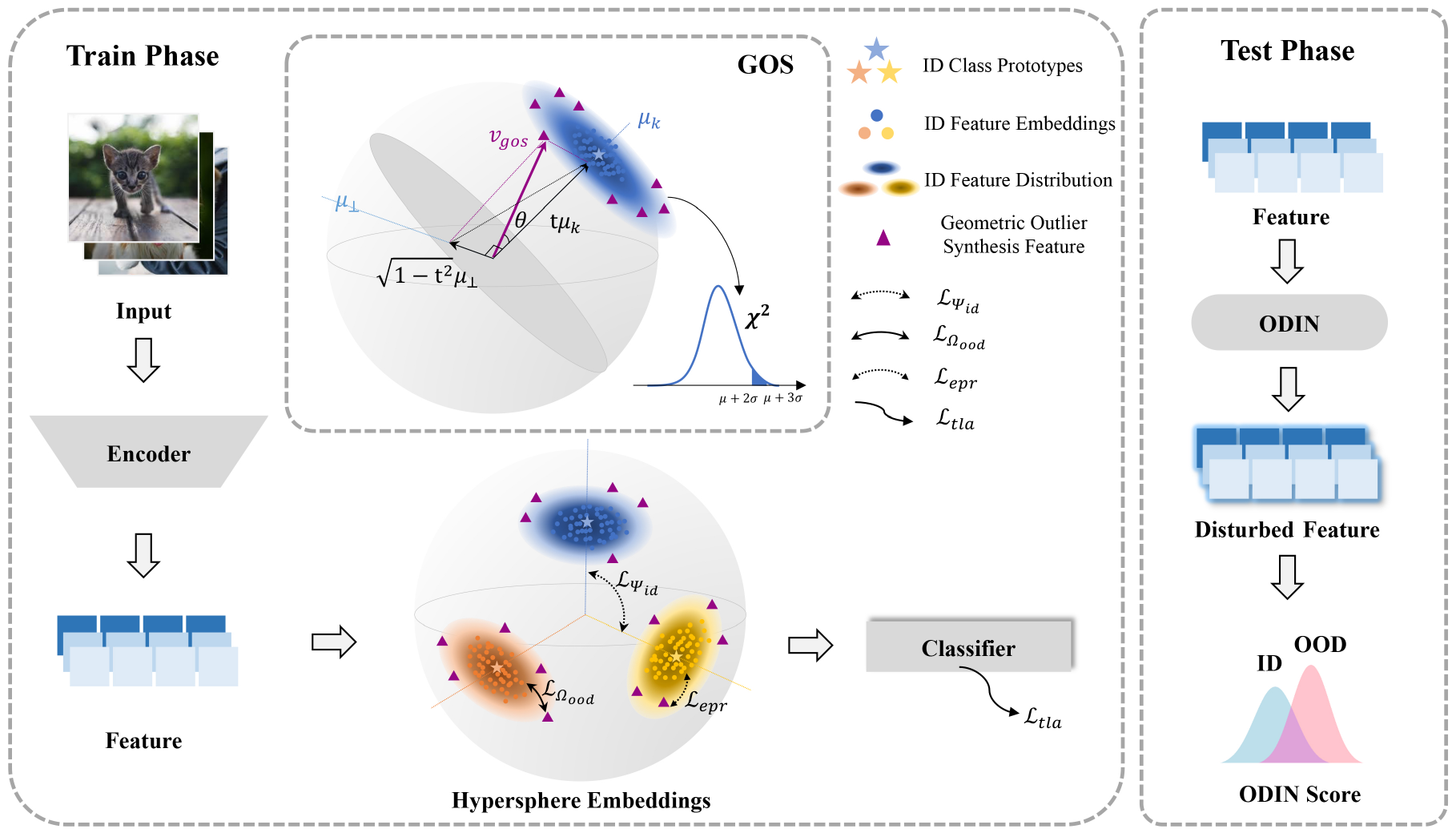}
    \caption{\textbf{Overview of the proposed VMF-GOS framework.}  During the \textbf{Train Phase}, the GOS module leverages the asymptotic equivalence between high-dimensional hyperspherical similarity and the $\chi^{2}$ distribution to directionally sample virtual outliers within low-likelihood annular regions. The model is optimized via a joint objective function consisting of ID semantic alignment ($\mathcal{L}_{\Psi_{id}}$), OOD boundary constraints ($\mathcal{L}_{\Omega_{ood}}$), Energy Polarization Regularization ($\mathcal{L}_{epr}$), and Temperature Scaling-Based Logit Adjustment ($\mathcal{L}_{tla}$), effectively achieving feature decoupling and boundary compression. During the \textbf{Test Phase}, the ODIN post-processing mechanism is employed to amplify the score disparity between ID and OOD samples via gradient-based input perturbations, ensuring robust detection performance under long-tailed distributions.}
    \label{fig:framework}
\end{figure*}
\section{Preliminaries}
\textbf{Task Definition.} Let $\mathcal{D}_{in} = \{(x_i, y_i)\}_{i=1}^N$ denote the training set, characterized by a long-tailed ID set. The input space and label space for ID data are denoted by $\mathcal{X}$ and $\mathcal{Y} = \{1, 2, \dots, K\}$, respectively. In a long-tailed distribution, the number of samples per class $N_y$ exhibits a significant imbalance, typically satisfying $N_1 \gg N_K$. Long-tailed OOD detection aims to jointly learn a feature encoder $f: \mathcal{X} \to \mathcal{Z}$ and a classifier $\varphi: \mathcal{Z} \to \mathcal{Y}$. For any test sample $x \in \mathcal{X}$, the model is required to categorize $x$ into its correct ID class if $x$ is drawn from $\mathcal{D}_{in}$, or identify it as OOD if it originates from an unknown distribution $\mathcal{D}_{out}$. 
\newline
\textbf{Energy-based Score.} Under the Gibbs distribution framework, the energy score $E(z; \tau)$ for a feature vector $z$ is defined as:
\begin{equation}
  E(z; \tau) = -\tau \log \sum_{j=1}^{K} \exp \left( \frac{\varphi_{j}(z)}{\tau} \right),
  \label{eq:energy_score}
\end{equation}
where $\varphi_{j}(z)$ denotes the logit produced by the classifier for the $j$-th class, and $\tau$ is the temperature parameter. This energy-based formulation provides a principled way to map the logit space to a scalar energy value, where lower energy typically indicates a higher likelihood of the sample belonging to ID.

\section{Methodology}
\textbf{Overview of Our Method.} The overview of our proposed VMF-GOS framework is illustrated in Figure~\ref{fig:framework}. Specifically, we first employ a vMF mixture model to explicitly characterize the underlying geometric topology of the ID data. Building upon this, the Geometry-guided virtual Outlier Synthesis (GOS) mechanism exploits the asymptotic equivalence between cosine similarity and the $\chi^{2}$ distribution in high-dimensional hyperspherical spaces. This enables the directional sampling of high-quality virtual negative samples within low-likelihood annular regions, thereby providing the model with precise and rigorous boundary constraints. During the optimization phase, we introduce a tripartite loss objective. DGS reinforces the compactness of ID regions by integrating implicit semantic augmentation with explicit boundary regularization. Temperature Scaling-Based Logit Adjustment (TLA) rectifies the predictive bias inherent in long-tailed distributions and bolsters ID confidence for minority classes. Energy Polarization Regularization (EPR) explicitly polarizes the energy gap between ID samples and virtual outliers, facilitating effective feature decoupling in the energy space. Finally, at the inference stage, we incorporate the ODIN post-processing strategy to further bolster the discriminative robustness for tail categories. Through the synergistic optimization of geometric structures and energy landscapes, VMF-GOS constructs a robust and class-balanced decision boundary for long-tailed OOD detection.

\subsection{Geometry-Guided Virtual Outlier Synthesis}
\textbf{Hyperspherical Distribution Modeling.} Considering that high-dimensional features exhibit prominent directional characteristics after normalization, we employ a vMF mixture model to characterize the geometric structure of ID data. For a $d$-dimensional unit vector $z \in \mathbb{S}^{d-1}$ in the embedding space, the probability density function of the vMF distribution is defined as:
\begin{equation}
  P_d(z \mid \mu_y, \kappa_y)=Z_d(\kappa_y)\exp(\kappa_y \mu_y^\top z),
  \label{eq:pdf_vmf}
\end{equation}
where $\mu_y$ denotes the mean direction vector of class $y$, $\kappa_y \ge 0$ is the concentration parameter that measures the degree of feature clustering around the mean direction, and $Z_d(\kappa_y)$ represents the normalization constant. Based on the aforementioned single-class model, we describe the global feature distribution through a vMF Mixture Model. The total probability density function is represented as the weighted sum of all class-conditional densities:
\begin{equation}
  P_d(z)=\sum_{y=1}^{K}\pi_y P_d(z \mid \mu_y, \kappa_y)=\sum_{y=1}^{K}\pi_y Z_d(\kappa_y)\exp(\kappa_y \mu_y^\top z),
  \label{eq:pdf_vmf_mixture}
\end{equation}
where $\pi_y=N_y/N$ denotes the prior probability of class $y$.
\newline
\textbf{Synthesis Strategy.} Departing from conventional approaches that model feature spaces as parametric Gaussian distributions, we leverage the asymptotic equivalence between cosine similarity and the $\chi^{2}$ distribution in high-dimensional hyperspherical spaces. We propose a virtual feature synthesis strategy within the low-likelihood annulus to precisely characterize decision boundaries in the absence of auxiliary OOD data. On a $d$-dimensional unit hypersphere, when the concentration parameter $\kappa$ is sufficiently large, the geometric displacement of samples from the class center $\mu_{k}$ can be mapped to a $\chi^{2}$ distribution. Specifically, we determine the target displacement within the boundary regions of the $\chi^{2}$ distribution:
$ \xi \sim \text{Uniform}(\mu_{\chi^{2}} + 2\sigma_{\chi^{2}}, \mu_{\chi^{2}} + 3\sigma_{\chi^{2}})$
This is subsequently transformed into a cosine similarity scalar $t$ via an inverse mapping function $t = 1 - \frac{\xi}{2\kappa}$. To generate semantically relevant and geometrically rigorous outliers, we apply Gram-Schmidt orthogonalization to a random noise vector to synthesize a tangential component $v_{\perp}$ that is strictly orthogonal to the class center $\mu_{k}$. The resulting feature is:
\begin{equation}
  z_{gos} = t\mu_{k} + \sqrt{1-t^{2}}v_{\perp}. 
  \label{eq:gos}
\end{equation}
The detailed mathematical derivation is provided in Appendix~\ref{sec:derivation_equ} and ~\ref{sec:derivation_norm}. Notably, we perform balanced synthesis across all classes, ensuring that even tail categories with limited samples are equipped with sufficient virtual negative constraints to prevent boundary over-expansion.

\begin{theorem}[Asymptotic $\chi^2$ Equivalence]
    Consider a $d$-dimensional feature vector $z \in \mathbb{S}^{d-1}$ following a vMF distribution with mean direction $\mu_k$ and concentration parameter $\kappa$. Under high-dimensional concentration ($d \to \infty$ and large $\kappa$), the scaled angular displacement $\xi = 2\kappa(1 - \mu_k^\top z)$ asymptotically follows a Chi-square distribution with $d-1$ degrees of freedom:
    \begin{equation}
        \xi = 2\kappa(1 - t) \sim \chi_{d-1}^2,
    \end{equation}
    where $t = \mu_k^\top z$ denotes the cosine similarity.
\end{theorem}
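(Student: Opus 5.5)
The plan is to compute the exact law of the cosine similarity $t=\mu_k^\top z$ under the vMF density of Eq.~\eqref{eq:pdf_vmf}, change variables to $\xi = 2\kappa(1-t)$, and show that the resulting density converges to the $\chi^2_{d-1}$ density in the stated regime.

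\textbf{Step 1: marginal law of $t$.} Write $z = t\mu_k + \sqrt{1-t^2}\,v_\perp$ with $v_\perp$ a unit vector in $\mu_k^\perp \cong \mathbb{S}^{d-2}$. This is the same decomposition used in Eq.~\eqref{eq:gos}. The surface measure on $\mathbb{S}^{d-1}$ factorizes as $(1-t^2)^{(d-3)/2}\,dt\,d\sigma_{d-2}(v_\perp)$. The vMF density depends on $z$ only through $t$, so $v_\perp$ is uniform and independent of $t$, and
\begin{equation*}
p(t) \propto e^{\kappa t}(1-t^2)^{(d-3)/2}, \qquad t\in[-1,1].
\end{equation*}

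\textbf{Step 2: change of variables.} Substitute $t = 1-\xi/(2\kappa)$, so $\xi\in[0,4\kappa]$ and $|dt/d\xi| = 1/(2\kappa)$. Then
\begin{equation*}
1-t^2=(1-t)(1+t)=\frac{\xi}{2\kappa}\Big(2-\frac{\xi}{2\kappa}\Big).
\end{equation*}
Absorbing all $\xi$-independent factors, including $e^{\kappa}$, into the constant gives
\begin{equation*}
q_{\kappa,d}(\xi) \propto \xi^{\frac{d-1}{2}-1} e^{-\xi/2}\Big(1-\frac{\xi}{4\kappa}\Big)^{\frac{d-3}{2}} \mathbf{1}\{0\le \xi\le 4\kappa\}.
\end{equation*}
The first two factors are exactly the unnormalized $\chi^2_{d-1}$ density. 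The entire claim therefore reduces to showing that the correction factor $(1-\xi/(4\kappa))^{(d-3)/2}$ is asymptotically negligible.

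\textbf{Step 3: the limit, which is the main obstacle.} For fixed $d$ and $\kappa\to\infty$ the argument is routine. The correction factor tends to $1$ pointwise, it is bounded by $1$, and the truncation $\xi\le 4\kappa$ disappears. Dominated convergence then gives convergence of the normalizing constants, and Scheffé's lemma upgrades pointwise convergence of densities to total-variation convergence to $\chi^2_{d-1}$.

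The difficulty is the joint limit $d\to\infty$ required by the statement. Under $\chi^2_{d-1}$ the typical value is $\xi\approx d$, where the correction factor is about $\exp(-d^2/(8\kappa))$. So the statement cannot hold for arbitrary joint growth, and I will make the regime explicit as $d^2/\kappa\to 0$. I will then show that the log-ratio $\tfrac{d-3}{2}\log(1-\xi/(4\kappa))$ is $O(d^2/\kappa)=o(1)$ uniformly on the window $\xi\le d + C\sqrt{d}$ for any fixed $C$. The remaining tail mass is controlled by standard $\chi^2$ concentration, for instance Laurent–Massart, together with the bound $q\le$ the $\chi^2$ kernel. These two pieces give TV convergence.

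Since both distributions drift to infinity as $d\to\infty$, I will state the conclusion in this total-variation sense, or equivalently for the standardized variable $(\xi-(d-1))/\sqrt{2(d-1)}$. This also justifies the quantile band $\mu_{\chi^2}+[2,3]\sigma_{\chi^2}$ and the inverse map $t=1-\xi/(2\kappa)$ used in GOS.
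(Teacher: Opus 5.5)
Your proof is correct, but it follows a different route from the paper. The paper's sketch works in Cartesian tangent coordinates. After rotating $\mu_k$ to $e_1$, it Taylor-expands $t=\sqrt{1-\sum_{i\ge2}z_i^2}\approx 1-\tfrac12\sum_{i\ge2}z_i^2$ and substitutes this into $e^{\kappa t}$ to argue that $\sqrt{\kappa}z_2,\dots,\sqrt{\kappa}z_d$ are approximately i.i.d.\ $\mathcal{N}(0,1)$. It then reads off $\xi\approx\sum_{i\ge2}(\sqrt{\kappa}z_i)^2\sim\chi^2_{d-1}$. This is a Laplace (Gaussian) approximation of the full vMF around its mode. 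It gives a multivariate picture (the tangential displacement is approximately isotropic Gaussian), and for fixed $d$ with $\kappa\to\infty$ it can be made rigorous via the continuous mapping theorem.

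As sketched, however, the paper's argument controls no error terms. It drops the chart Jacobian $(1-\sum_{i\ge2}z_i^2)^{-1/2}$. It drops the quartic Taylor remainder, which is of order $\kappa(\sum_{i\ge2}z_i^2)^2\approx d^2/\kappa$ at the typical radius. It also never specifies how $\kappa$ must grow with $d$, even though componentwise convergence to normals does not by itself control a sum of $d-1\to\infty$ terms.

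You instead integrate out the tangential direction exactly. Everything then collapses to a one-dimensional density: the $\chi^2_{d-1}$ kernel times the explicit factor $(1-\xi/(4\kappa))^{(d-3)/2}$. That gives a genuine limit theorem, in total variation, under an explicit regime. As a by-product it shows that $v_\perp$ is exactly uniform and independent of $t$, which is what justifies drawing $v_\perp$ independently of $t$ in Eq.~\eqref{eq:gos}.

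Two minor refinements. First, the size $e^{-d^2/(8\kappa)}$ of the correction at $\xi\approx d$ is not itself the obstruction, since a nearly constant factor is absorbed by normalization. What matters is the factor's variation across the bulk of width $\sqrt{d}$, which is of order $d^{3/2}/\kappa$. So $d^2/\kappa\to0$ is sufficient for your argument, and it is exactly what lets the crude bound by the $\chi^2$ kernel control the normalized tails, but it is not sharp. Second, in the fixed-$d$ case the factor is bounded by $1$ only when $d\ge3$.
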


\begin{proof}[Proof Sketch]
By rotating the coordinate system such that $\mu_k = (1, 0, \dots, 0)^\top$, the similarity satisfies $t = \sqrt{1 - \sum_{i=2}^d z_i^2}$. Under the localization assumption, a second-order Taylor expansion yields $t \approx 1 - \frac{1}{2} \sum_{i=2}^d z_i^2$. Substituting this into the vMF probability density function, the projected components $\sqrt{\kappa}z_i$ ($i=2,\dots,d$) converge to i.i.d. standard normal variables $\mathcal{N}(0, 1)$. Thus, the sum of squares $\xi = 2\kappa(1-t) \approx \sum_{i=2}^d (\sqrt{\kappa}z_i)^2$ asymptotically follows $\chi_{d-1}^2$. See Appendix for the complete derivation.
\end{proof}
Guided by this theorem, the GOS strategy samples $\xi$ from the low-likelihood tail regions of the $\chi_{d-1}^2$ distribution, specifically $\xi \sim \text{Uniform}(\mu_{\chi^2} + 2\sigma_{\chi^2}, \mu_{\chi^2} + 3\sigma_{\chi^2})$, and maps it back to similarity scalars via $t = 1 - \frac{\xi}{2\kappa}$.

\subsection{Optimization Targets}
\textbf{Dual-Granularity Semantic Loss.} To synergistically exploit global distribution information and local synthesized samples, inspired by \cite{he2025patt}, we design the DGS Loss. The detailed mathematical derivation is provided in Appendix~\ref{sec:derivation_dgs}. This loss integrates geometrically synthesized outliers as explicit constraints into the contrastive learning framework to reinforce the compactness of ID boundaries:
\begin{equation}
  \mathcal{L}_{dgs}(z_{i}, y) = -\log \frac{\Psi_{id}(z_{i}, y)}{\sum_{j=1}^K \Psi_{id}(z_{i}, j) + \sum_{m=1}^M \Omega_{ood}(z_{i}, z_{m}^{gos})}.
  \label{eq:dgs_loss}
\end{equation}
Intuitively, the numerator $\Psi_{id}$ encourages features to align with the continuous semantic region of their ground-truth class, while the denominator's $\Omega_{ood}$ term imposes a hard-negative boundary constraint using synthesized outliers.

The implicit ID term $\Psi_{id}$ leverages the closed-form expectation of the vMF distribution to measure the normalized similarity between $z_{i}$ and the continuous semantic region of class $j$:
\begin{equation}
  \Psi_{id}(z_{i}, j) = \frac{\pi_{j} Z_{d}(\tilde{\kappa}_{y}) Z_{d}(\kappa_{j})}{\pi_{y} Z_{d}(\kappa_{j}) Z_{d}(\tilde{\kappa}_{j})},
  \label{eq:psi_id}
\end{equation}
where $Z_{d}(\cdot)$ is the normalization factor of the $d$-dimensional vMF distribution, and $\tilde{\kappa}_j = \|\kappa_j\mu_j + z_i / \tau\|_2$ denotes the dynamic concentration parameter updated after observing the sample. 

The explicit OOD term $\Omega_{ood}$ characterizes the relative likelihood intensity of $z_{i}$ subject to the perturbation from the synthesized outlier $z_{m}^{gos}$:
\begin{equation}
  \Omega_{ood}(z_{i}, z_{m}^{gos}) = \frac{Z_{d}(\tilde{\kappa}_{y}) Z_{d}(\kappa_{k(m)})}{\pi_{y} Z_{d}(\kappa_{k(m)}) Z_{d}(\tilde{\kappa}_{m}^{gos})},
  \label{eq:omega_ood}
\end{equation}
where $k(m)$ denotes the index of the source anchor class from which the $m$-th virtual outlier is derived. The term $\tilde{\kappa}_{m}^{gos} = \|(z_m^{gos} + z_i) / \tau\|_2$ encapsulates the interaction intensity between the query feature $z_i$ and the synthesized outlier $z_m^{gos}$ in the embedding space. The class prior $\pi_{y_i}$ in the denominator serves as a pivotal penalty weight. For tail classes with small $\pi_{y_i}$, the loss assigned to OOD-proximal samples is amplified, effectively suppressing the boundary expansion that typically plagues long-tailed distributions.
\newline
\textbf{Temperature Scaling-Based Logit Adjustment.} To alleviate the predictive bias caused by long-tailed distributions, we follow the recent work of \cite{he2025patt} and incorporate a temperature scaling factor $\epsilon$ to optimize class margins and enhance ID confidence. Given the class prior $\pi_{y}=N_{y}/N$, the TLA loss function is defined as follows:
\begin{equation}
  \mathcal{L}_{tla}(z,y)=-\log\frac{\pi_{y}\exp(\varphi_{y}(z)/\epsilon)}{\sum_{j=1}^{K}\pi_{j}\exp(\varphi_{j}(z)/\epsilon)},
  \label{eq:tla_loss}
\end{equation}
where $\epsilon$ denotes the temperature coefficient for smoothing the probability distribution. By explicitly compensating for prior bias, TLA ensures that the model generates high-confidence outputs even for tail classes with limited samples.
\newline
\textbf{Energy Polarization Regularization.} To effectively distinguish ID data from the GOS features in the energy space, we follow prior work of \cite{du2022vos} and propose the Energy Polarization Regularization (EPR). The core motivation of EPR is to shape a discriminative energy landscape by imposing explicit contrastive constraints in the energy space. Specifically, we introduce a two-layer non-linear Multi-Layer Perceptron (MLP) to define a mapping function $\phi(\cdot)$. The EPR loss is formulated as follows:
\begin{equation}
  \begin{aligned}
    \mathcal{L}_{epr} &= \mathbb{E}_{z \sim \mathcal{D}_{gos}} [-\log \sigma(\phi(E(z; \epsilon)))] \\
    &+ \mathbb{E}_{z \sim \mathcal{D}_{in}} [-\log (1 - \sigma(\phi(E(z; \epsilon))))],
  \end{aligned}
  \label{eq:esl_loss}
\end{equation}
where $\sigma(\cdot)$ denotes the sigmoid function and $\mathcal{D}_{gos}$ denotes the set of virtual outliers synthesized through our GOS strategy. 
\newline
\textbf{Joint Optimization.} Finally, the overall loss function of our framework is as follows:
\begin{equation}
  \mathcal{L}=\mathcal{L}_{dgs}+\alpha\mathcal{L}_{tla}+\beta\mathcal{L}_{epr},
  \label{eq:total_loss}
\end{equation}
where $\alpha$ and $\beta$ are hyperparameters that balance the contributions of each  components. 

\subsection{Post-Hoc OOD Detection}
To further enhance the discriminative margin between ID and OOD samples in the feature space, we incorporate adversarial perturbations as proposed in the ODIN\cite{liang2017odin} framework. For each test sample $x$, a perturbed sample $\hat{x}$ is generated by computing the gradient of the log-softmax score with respect to the input for the predicted class $\hat{y}$:
\begin{equation}
  \hat{x}=x-\eta\cdot\text{sign}(-\nabla_{x}\log p(\hat{y}\mid x;\tau_{o})),
  \label{eq:odin_sample}
\end{equation}
where $\eta$ denotes the perturbation magnitude and $\tau_{o}$ is the temperature scaling parameter for ODIN. The perturbed sample $\hat{x}$ is then re-fed into the model to obtain its feature representation $\hat{z}$, based on which the final OOD score $S(\hat{x})$ is derived. Grounded in energy-based model theory, the scoring function is defined using the negative energy function:
\begin{equation}
  S(\hat{x};\tau_{o})= - \tau_{o}\log\sum_{j=1}^{K}\exp(\varphi_{j}(\hat{z})/\tau_{o}).
  \label{eq:odin_energy}
\end{equation}
A higher value of $S(\hat{x};\tau_{o})$ indicates that the sample is more likely to be classified as ID, whereas a lower score signifies an OOD sample.

\section{Experiments}

\subsection{Experiment Settings}
\textbf{Datasets.} We evaluate our method on three widely used long-tailed benchmark datasets: CIFAR10-LT and CIFAR100-LT\cite{cao2019cifar-lt}. For CIFAR10/100-LT, we follow the exponential imbalance decay protocol with a default imbalance ratio of 100. The standard test sets of CIFAR10 and CIFAR100 are utilized as the ID test data $D_{in}^{test}$.For OOD detection evaluation, we follow the SC-OOD benchmark\cite{yang2021sc-ood} and adopt five diverse datasets as $D_{out}^{test}$ for CIFAR-LT: Textures\cite{cimpoi2014textures}, SVHN\cite{netzer2011svhn}, Tiny ImageNet\cite{le2015tiny}, LSUN\cite{yu2015lsun}, and Places365\cite{zhou2017places365}. Furthermore, we conduct near-OOD experiments\cite{yang2021sc-ood}by using CIFAR-100 as $D_{out}^{test}$ for CIFAR10-LT and vice versa. 
\newline
\textbf{Evaluation Protocols.} Following \cite{wang2022pascl} and \cite{wei2024eat}, we use the below evaluation measures: 
\begin{itemize}
    \item \textbf{AUROC} ($\uparrow$): Area Under the Receiver Operating Characteristic curve. It represents the probability that a randomly selected OOD sample is assigned a higher detection score than a randomly selected ID sample.
    \item \textbf{AUPR} ($\uparrow$): Area Under the Precision-Recall curve. This metric summarizes the precision-recall trade-off, reflecting the average precision across various recall levels.
    \item \textbf{FPR@TPR$n$} ($\downarrow$): The False Positive Rate (FPR) when the True Positive Rate (TPR) reaches $n$. Abbreviated as \textbf{FPR95}, \textbf{FPR@TPR}$95$ quantifies the proportion of ID samples misclassified as OOD while ensuring that $95\%$ of OOD samples are correctly identified.
    \item \textbf{ACC@TPR$n$} ($\uparrow$): The classification accuracy on the remaining ID samples after successfully detecting $n$ of the OOD samples. For $n=95\%$, it is denoted as \textbf{ACC95}.
    \item \textbf{ACC@FPR$n$} ($\uparrow$): The classification accuracy on the remaining ID samples when a false alarm rate of $n$ is allowed for ID samples. Notably, \textbf{ACC@FPR}$0$ is equivalent to the standard accuracy (\textbf{ACC}), representing the model's original performance on the full test set.
\end{itemize}

\textbf{Implementation Details.}
Following the settings in \cite{wang2022pascl}, we utilize ResNet-18 for CIFAR-LT. What's more, we use Adam optimizer with an initial learning rate $1 \times 10^{-3}$ for CIFAR-based experiments. We train each model for $100$ epochs using a batch size of $128$.

\begin{table}[t!]
    \caption{Results on CIFAR10-LT using ResNet18. The best results are shown in bold. Mean and standard deviation over six random runs are reported for OE, PASCL, EAT and our method. Average means the results averaged across six different $\mathcal{D}_{\text{out}}^{\text{test}}$ sets.}
    \begin{subtable}{\columnwidth}
        \centering
        \caption{OOD detection results and in-distribution classification results in terms of ACC95.}
        \resizebox{\columnwidth}{!}
        {
            \begin{tabular}{c c >{$}c<{$} >{$}c<{$} >{$}c<{$} >{$}c<{$}}
                \toprule
                $\mathcal{D}_{\text{out}}^{\text{test}}$ & \textbf{Method} & \textbf{AUROC($\uparrow$)} & \textbf{AUPR($\uparrow$)} & \textbf{FPR95($\downarrow$)} & \textbf{ACC95($\uparrow$)} \\ 
                \midrule
                \multirow{4}{*}{Texture} 
                    & OE      & 92.59_{\pm0.42} & 83.32_{\pm1.67} & 25.10_{\pm1.08} & 84.52_{\pm0.76} \\
                    & PASCL  & 93.16_{\pm0.37} & 84.80_{\pm1.50} & 23.26_{\pm0.91} & 85.86_{\pm0.72} \\
                    & EAT    & 95.44_{\pm0.46} & 92.28_{\pm0.95} & 21.50_{\pm1.50} & 87.00_{\pm0.66} \\
                    \rowcolor{lightgray} \cellcolor{white}
                    & \textbf{Ours} & \mathbf{90.85}_{\pm0.40} & \mathbf{84.75}_{\pm0.73} & \mathbf{36.08}_{\pm1.95} & \mathbf{94.96}_{\pm0.65} \\
                \midrule
                \multirow{4}{*}{SVHN}           
                    & OE      & 95.10_{\pm1.01} & 97.14_{\pm0.81} & 16.15_{\pm1.52} & 81.33_{\pm0.81} \\
                    & PASCL  & 96.63_{\pm0.90} & 98.06_{\pm0.56} & 12.18_{\pm3.33} & 82.72_{\pm1.51} \\
                    & EAT    & 97.92_{\pm0.36} & 99.06_{\pm0.20} & 9.87_{\pm2.06} & 84.39_{\pm0.51} \\
                    \rowcolor{lightgray} \cellcolor{white}
                    & \textbf{Ours} & \mathbf{98.98}_{\pm0.27} & \mathbf{99.60}_{\pm0.08} & \mathbf{5.42}_{\pm1.43} & \mathbf{86.92}_{\pm0.58} \\
                \midrule
                \multirow{4}{*}{CIFAR100}       
                    & OE      & 83.40_{\pm0.30} & 80.93_{\pm0.57} & 56.96_{\pm0.91} & 94.56_{\pm0.57} \\
                    & PASCL  & 84.43_{\pm0.23} & 82.99_{\pm0.48} & 57.27_{\pm0.88} & 94.48_{\pm0.31} \\
                    & EAT    & 85.93_{\pm0.15} & 86.10_{\pm0.35} & 54.13_{\pm0.63} & 95.81_{\pm0.41} \\
                    \rowcolor{lightgray} \cellcolor{white}
                    & \textbf{Ours} & \mathbf{84.99}_{\pm0.26} & \mathbf{82.91}_{\pm0.19} & \mathbf{51.18}_{\pm0.78} & \mathbf{96.95}_{\pm0.22} \\
                \midrule
                \multirow{4}{*}{\makecell{Tiny \\ ImageNet}} 
                    & OE      & 86.14_{\pm0.29} & 79.33_{\pm0.65} & 47.78_{\pm0.72} & 91.19_{\pm0.33} \\
                    & PASCL  & 87.14_{\pm0.18} & 81.54_{\pm0.38} & 47.69_{\pm0.59} & 91.20_{\pm0.35}\\
                    & EAT    & 89.11_{\pm0.34} & 85.43_{\pm0.58} & 41.75_{\pm0.68} & 91.67_{\pm0.65} \\
                    \rowcolor{lightgray} \cellcolor{white}
                    & \textbf{Ours} & \mathbf{87.06}_{\pm0.30} & \mathbf{85.50}_{\pm0.13} & \mathbf{46.49}_{\pm1.15} & \mathbf{96.28}_{\pm0.34} \\
                \midrule
                \multirow{4}{*}{LSUN}           
                    & OE      & 91.35_{\pm0.23} & 87.62_{\pm0.82} & 27.86_{\pm0.68} & 85.49_{\pm0.69} \\
                    & PASCL  & 93.17_{\pm0.15} & 91.76_{\pm0.53} & 26.40_{\pm1.00} & 86.67_{\pm0.90} \\
                    & EAT    & 95.13_{\pm0.43} & 94.12_{\pm0.61} & 19.72_{\pm1.61} & 86.68_{\pm0.64} \\
                    \rowcolor{lightgray} \cellcolor{white}
                    & \textbf{Ours} & \mathbf{97.18}_{\pm0.43} & \mathbf{96.87}_{\pm0.19} & \mathbf{12.56}_{\pm1.74} & \mathbf{89.30}_{\pm0.77} \\
                \midrule
                \multirow{4}{*}{Places365}      
                    & OE      & 90.07_{\pm0.26} & 95.15_{\pm0.24} & 34.04_{\pm0.91} & 87.07_{\pm0.53} \\
                    & PASCL  & 91.43_{\pm0.17} & 96.28_{\pm0.14} & 33.40_{\pm0.88} & 87.87_{\pm0.71} \\
                    & EAT    & 93.68_{\pm0.27} & 97.42_{\pm0.14} & 26.03_{\pm0.92} & 87.64_{\pm0.68} \\
                    \rowcolor{lightgray} \cellcolor{white}
                    & \textbf{Ours} & \mathbf{90.86}_{\pm0.11} & \mathbf{89.23}_{\pm0.27} & \mathbf{36.35}_{\pm0.89} & \mathbf{94.95}_{\pm0.57} \\
                \midrule
                \multirow{4}{*}{Average}        
                    & OE      & 89.77_{\pm0.27} & 87.25_{\pm0.61} & 34.65_{\pm0.46} & 87.36_{\pm0.51} \\
                    & PASCL  & 90.99_{\pm0.19} & 89.24_{\pm0.34} & 33.36_{\pm0.79} & 88.13_{\pm0.56} \\
                    & EAT    & 92.87_{\pm0.33} & 92.40_{\pm0.47} & 28.83_{\pm1.23} & 88.86_{\pm0.59} \\
                    \rowcolor{lightgray} \cellcolor{white}
                    & \textbf{Ours} & \mathbf{91.65}_{\pm0.39} & \mathbf{89.81}_{\pm0.26} & \mathbf{31.35}_{\pm0.68} & \mathbf{93.23}_{\pm0.45} \\
                \bottomrule
            \end{tabular}
        }
        \label{tab:cifar10a}
    \end{subtable}
    
    \vspace{0.5em}
    
    \begin{subtable}{\columnwidth}
    \centering
    \caption{in-distribution classification results in terms of ACC@FPR$n$.}
    \resizebox{\columnwidth}{!}{
        \begin{tabular}{c >{$}c<{$} >{$}c<{$} >{$}c<{$} >{$}c<{$}}
            \toprule
            \multirow{2}{*}{\textbf{Method}} & \multicolumn{4}{c}{\textbf{ACC@FPR$n$ ($\uparrow$)}} \\ 
            & 0 & 0.001 & 0.01 & 0.1 \\ 
            \midrule
            OE    & 73.54_{\pm0.77} & 73.90_{\pm0.77} & 74.46_{\pm0.81} & 78.88_{\pm0.66} \\
            PASCL & 77.08_{\pm1.01} & 77.13_{\pm1.02} & 77.64_{\pm0.99} & 81.96_{\pm0.85} \\
            EAT   & 81.31_{\pm0.26} & 81.36_{\pm0.25} & 81.81_{\pm0.26} & 84.40_{\pm0.28} \\
            \rowcolor{lightgray}
            \textbf{Ours} & \mathbf{84.80}_{\pm0.34} & \mathbf{84.87}_{\pm0.33} & \mathbf{85.37}_{\pm0.23} & \mathbf{88.51}_{\pm0.28} \\
            \bottomrule
        \end{tabular}
        }
    \label{tab:cifar10b}
    \end{subtable}

    \vspace{0.5em}
    
    \begin{subtable}{\columnwidth}
        \centering
        \caption{Comparison with other methods.}
        \resizebox{\columnwidth}{!}{
            \begin{tabular}{c >{$}c<{$} >{$}c<{$} >{$}c<{$} >{$}c<{$}}
                \toprule
                \textbf{Method} & \textbf{AUROC($\uparrow$)} & \textbf{AUPR($\uparrow$)} & \textbf{FPR95($\downarrow$)} & \textbf{ACC($\uparrow$)} \\
                \midrule
                    ST(MSP)           & 72.28 & 70.27 & 66.07 & 72.34 \\
                    EnergyOE          & 89.31 & 88.92 & 40.88 & 74.68 \\
                    OE                & 89.77_{\pm0.27} & 87.25_{\pm0.61} & 34.65_{\pm0.46} & 73.84_{\pm0.77} \\
                    PASCL             & 90.99_{\pm0.19} & 89.24_{\pm0.34} & 33.36_{\pm0.79} & 77.08_{\pm1.01} \\
                    EAT               & \mathbf{92.87}_{\pm0.33} & \mathbf{92.40}_{\pm0.47} & \mathbf{28.83}_{\pm1.23} & 81.31_{\pm0.26} \\
                    PATT              & 91.62_{\pm0.50} & \underline{90.48}_{\pm0.30} & \underline{29.89}_{\pm1.40} & \underline{84.77}_{\pm0.20} \\
                    DARL              & 88.65_{\pm1.12} & 86.17_{\pm0.89} & 39.49_{\pm1.04} & 77.55_{\pm0.43} \\
                    \rowcolor{lightgray}
                    \textbf{Ours}     & \underline{91.65}_{\pm0.39} & 89.81_{\pm0.26} & 31.35_{\pm0.68} & \mathbf{84.80}_{\pm0.28} \\
                \bottomrule
            \end{tabular}
        }
        \label{tab:cifar10c}
    \end{subtable}
    \label{tab:cifar10_abc}
\end{table}

\subsection{Main Results}
Table~\ref{tab:cifar10_abc} and Table~\ref{tab:cifar100_abc} report the experimental results on the CIFAR10-LT and CIFAR100-LT datasets, respectively. Each table contains three sub-tables: (a) presents the AUROC, AUPR, FPR95, and ACC95 metrics along with their averages across six different $D_{out}^{test}$ sets; (b) compares the ACC@FPRn metrics across various n values; (c) summarizes the comprehensive performance of our method against sota approaches in both OOD detection and ID classification. For a fair comparison, the data for MSP\cite{hendrycks2016msp}, EnergyOE\cite{liu2020energy}, OE\cite{hendrycks2018oe}, PASCL\cite{wang2022pascl}, EAT\cite{wei2024eat}, and PATT\cite{he2025patt} are cited from their original papers. The results for DARL\cite{zhang2025darl} and our method are reported as the mean of 6 experimental runs using a ResNet-18 backbone.

As shown in Table~\ref{tab:cifar10_abc}, our method significantly outperforms existing baselines across the majority of metrics. In Table~\ref{tab:cifar10a}, our approach achieves the best performance in ACC95 (93.23\%) and ACC (84.80\%). Notably, in the ACC@FPR$n$ evaluation in Table~\ref{tab:cifar10b}, our method remains highly competitive across all n values. For instance, at n=0.1, our accuracy reaches 88.51\%, substantially exceeding EAT (84.40\%) and PASCL (81.96\%). While our AUROC in Table~\ref{tab:cifar10c} is slightly lower than that of EAT, our method significantly enhances the recognition accuracy of ID long-tailed classes while maintaining competitive OOD detection performance, demonstrating a superior balance for practical applications.

In the more challenging CIFAR100-LT task, which features higher class diversity and data imbalance (Table~\ref{tab:cifar100_abc}), the advantages of our method are even more pronounced. In the full-method comparison in Table~\ref{tab:cifar100c}, our approach achieves sota performance across all core metrics. Specifically, our AUROC is 0.81\% higher than PATT (77.06\% vs 76.25\%), and the improvement in AUPR reaches 3.09\% (73.46\% vs 70.37\%). Furthermore, FPR95 is reduced to 61.99\%, and ACC is improved to 51.11\%. In Table~\ref{tab:cifar100a} and Table~\ref{tab:cifar100b}, our method achieves leap-forward improvements in ACC95 (77.66\%) and ACC@FPR$n$. For example, our ACC95 is 5.23\% higher than PASCL (77.66\% vs 72.43\%), and at n=0.1, our ACC@FPRn outperforms EAT by 6.30\% (54.69\% vs 48.39\%). These results validate that as task complexity increases, our method exhibits stronger capabilities in capturing complex long-tailed distributions and maintaining sharp OOD boundaries.

\begin{table}[t!]
    \caption{Results on CIFAR100-LT using ResNet18. The best results are shown in bold. Mean and standard deviation over six random runs are reported for OE, PASCL, EAT and our method. Average meansthe results averaged across six different $\mathcal{D}_{\text{out}}^{\text{test}}$ sets.}
    \begin{subtable}{\columnwidth}
        \centering
        \caption{OOD detection results and in-distribution classification results in terms of ACC95.}
        \resizebox{\columnwidth}{!}{
            \begin{tabular}{c c >{$}c<{$} >{$}c<{$} >{$}c<{$} >{$}c<{$}}
                \toprule
                $\mathcal{D}_{\text{out}}^{\text{test}}$ & \textbf{Method} & \textbf{AUROC($\uparrow$)} & \textbf{AUPR($\uparrow$)} & \textbf{FPR95($\downarrow$)} & \textbf{ACC95($\uparrow$)} \\ 
                \midrule
                \multirow{4}{*}{Texture} 
                    & OE    & 76.71_{\pm1.20} & 58.79_{\pm1.39} & 68.28_{\pm1.53} & 71.43_{\pm1.58} \\
                    & PASCL & 76.01_{\pm0.66} & 58.12_{\pm1.06} & 67.43_{\pm1.93} & 73.11_{\pm1.55} \\
                    & EAT   & 80.27_{\pm0.76} & 71.76_{\pm1.56} & 67.53_{\pm0.64} & 73.76_{\pm0.75} \\
                    \rowcolor{lightgray} \cellcolor{white}
                    & \textbf{Ours} & \mathbf{71.10}_{\pm0.98} & \mathbf{59.70}_{\pm0.96} & \mathbf{85.43}_{\pm2.93} & \mathbf{88.36}_{\pm1.29} \\
                \midrule
                \multirow{4}{*}{SVHN}            
                    & OE    & 77.61_{\pm3.26} & 86.82_{\pm2.50} & 58.04_{\pm4.82} & 64.27_{\pm3.26} \\
                    & PASCL & 80.19_{\pm2.19} & 88.49_{\pm1.59} & 53.45_{\pm3.60} & 64.50_{\pm1.87} \\
                    & EAT   & 83.11_{\pm2.83} & 89.71_{\pm2.08} & 47.78_{\pm4.87} & 61.67_{\pm2.65} \\
                    \rowcolor{lightgray} \cellcolor{white}
                    & \textbf{Ours} & \mathbf{98.38}_{\pm0.11} & \mathbf{99.38}_{\pm0.05} & \mathbf{9.33}_{\pm0.77} & \mathbf{54.46}_{\pm0.44} \\
                \midrule
                \multirow{4}{*}{CIFAR10}        
                    & OE    & 62.23_{\pm0.30} & 57.57_{\pm0.34} & 80.64_{\pm0.98} & 82.67_{\pm0.99} \\
                    & PASCL & 62.33_{\pm0.38} & 57.14_{\pm0.20} & 79.55_{\pm0.84} & 82.30_{\pm1.07} \\
                    & EAT   & 61.62_{\pm0.47} & 55.30_{\pm0.54} & 77.97_{\pm0.77} & 82.61_{\pm0.61} \\
                    \rowcolor{lightgray} \cellcolor{white}
                    & \textbf{Ours} & \mathbf{63.73}_{\pm0.66} & \mathbf{60.72}_{\pm0.74} & \mathbf{85.49}_{\pm1.35} & \mathbf{88.72}_{\pm1.16} \\
                \midrule
                \multirow{4}{*}{\makecell{Tiny \\ ImageNet}} 
                    & OE    & 68.04_{\pm0.37} & 51.66_{\pm0.51} & 76.66_{\pm0.47} & 76.22_{\pm0.61} \\
                    & PASCL & 68.20_{\pm0.37} & 51.53_{\pm0.42} & 76.11_{\pm0.80} & 77.56_{\pm1.15} \\
                    & EAT   & 68.34_{\pm0.28} & 52.79_{\pm0.25} & 74.89_{\pm0.49} & 77.07_{\pm0.39} \\
                    \rowcolor{lightgray} \cellcolor{white}
                    & \textbf{Ours} & \mathbf{71.65}_{\pm0.68} & \mathbf{68.31}_{\pm0.57} & \mathbf{74.94}_{\pm1.07} & \mathbf{83.44}_{\pm1.19} \\
                \midrule
                \multirow{4}{*}{LSUN}            
                    & OE    & 77.10_{\pm0.64} & 61.42_{\pm0.99} & 63.98_{\pm1.38} & 65.64_{\pm1.03} \\
                    & PASCL & 77.19_{\pm0.44} & 61.27_{\pm0.72} & 63.31_{\pm0.87} & 68.05_{\pm1.24} \\
                    & EAT   & 81.09_{\pm0.32} & 67.46_{\pm0.64} & 55.02_{\pm1.20} & 62.07_{\pm0.78} \\
                    \rowcolor{lightgray} \cellcolor{white}
                    & \textbf{Ours} & \mathbf{89.52}_{\pm1.79} & \mathbf{88.38}_{\pm2.15} & \mathbf{37.96}_{\pm4.00} & \mathbf{65.81}_{\pm2.17} \\
                \midrule
                \multirow{4}{*}{Places365}      
                    & OE    & 75.80_{\pm0.45} & 86.68_{\pm0.38} & 65.72_{\pm0.92} & 67.04_{\pm0.49} \\
                    & PASCL & 76.02_{\pm0.21} & 86.52_{\pm0.29} & 64.81_{\pm0.27} & 69.04_{\pm0.90} \\
                    & EAT   & 78.28_{\pm0.31} & 88.20_{\pm0.20} & 60.85_{\pm0.69} & 66.15_{\pm0.68} \\
                    \rowcolor{lightgray} \cellcolor{white}
                    & \textbf{Ours} & \mathbf{67.94}_{\pm1.14} & \mathbf{63.16}_{\pm1.00} & \mathbf{78.79}_{\pm1.22} & \mathbf{85.17}_{\pm0.80} \\
                \midrule
                \multirow{4}{*}{Average}        
                    & OE    & 72.91_{\pm0.68} & 67.16_{\pm0.57} & 68.89_{\pm1.07} & 71.21_{\pm0.84} \\
                    & PASCL & 73.32_{\pm0.32} & 67.18_{\pm0.10} & 67.44_{\pm0.58} & 72.43_{\pm0.66} \\
                    & EAT   & 75.45_{\pm0.83} & 70.87_{\pm0.88} & 64.01_{\pm1.44} & 70.55_{\pm0.98} \\
                    \rowcolor{lightgray} \cellcolor{white}
                    & \textbf{Ours} & \mathbf{77.05}_{\pm0.51} & \mathbf{73.28}_{\pm0.49} & \mathbf{61.99}_{\pm1.04} & \mathbf{77.66}_{\pm0.76} \\
            \bottomrule
            \end{tabular}
        }
        \label{tab:cifar100a}
    \end{subtable}
    
    \vspace{0.5em}
    
    \begin{subtable}{\columnwidth}
    \centering
    \caption{in-distribution classification results in terms of ACC@FPR$n$.}
    \resizebox{\columnwidth}{!}{
        \begin{tabular}{c >{$}c<{$} >{$}c<{$} >{$}c<{$} >{$}c<{$}}
            \toprule
            \multirow{2}{*}{\textbf{Method}} & \multicolumn{4}{c}{\textbf{ACC@FPR$n$ ($\uparrow$)}} \\ 
            & 0 & 0.001 & 0.01 & 0.1 \\ 
            \midrule
            OE & 39.04_{\pm0.37} & 39.07_{\pm0.38} & 39.38_{\pm0.38} & 42.40_{\pm0.44} \\
            PASCL & 43.10_{\pm0.47} & 43.12_{\pm0.47} & 43.39_{\pm0.48} & 46.14_{\pm0.38} \\
            EAT & 46.23_{\pm0.25} & 46.24_{\pm0.25} & 46.38_{\pm0.23} & 48.39_{\pm0.32} \\
            \rowcolor{lightgray}
            \textbf{Ours} & \mathbf{51.03}_{\pm0.47} & \mathbf{51.08}_{\pm0.47} & \mathbf{51.43}_{\pm0.52} & \mathbf{54.69}_{\pm0.52} \\
            \bottomrule
        \end{tabular}
        }
        \label{tab:cifar100b}
    \end{subtable}

    \vspace{0.5em}
    
    \begin{subtable}{\columnwidth}
        \centering
        \caption{Comparison with other methods.}
        \resizebox{\columnwidth}{!}{
            \begin{tabular}{c >{$}c<{$} >{$}c<{$} >{$}c<{$} >{$}c<{$}}
                \toprule
                \textbf{Method} & \textbf{AUROC($\uparrow$)} & \textbf{AUPR($\uparrow$)} & \textbf{FPR95($\downarrow$)} & \textbf{ACC($\uparrow$)} \\
                \midrule
                ST(MSP)      & 61.00 & 57.54 & 82.01 & 40.97 \\
                EnergyOE     & 71.10 & 67.23 & 71.78 & 39.05 \\
                OE           & 72.91_{\pm0.68} & 67.16_{\pm0.57} & 68.89_{\pm1.07} & 39.04_{\pm0.37} \\
                PASCL        & 73.32_{\pm0.32} & 67.18_{\pm0.10} & 67.44_{\pm0.58} & 43.10_{\pm0.47} \\
                EAT          & 74.41_{\pm0.83} & 69.57_{\pm0.88} & 65.05_{\pm1.44} & 46.13_{\pm0.25} \\
                PATT         & \underline{76.25}_{\pm0.90} & \underline{70.37}_{\pm0.90} & \underline{62.94}_{\pm1.40} & \underline{50.07}_{\pm0.30} \\
                DARL         & 70.35_{\pm0.87} & 67.82_{\pm0.78} & 83.73_{\pm1.46} & 39.51_{\pm0.55} \\
                \rowcolor{lightgray}
                \textbf{Ours} & \mathbf{77.05}_{\pm0.51} & \mathbf{73.28}_{\pm0.49} & \mathbf{61.99}_{\pm1.04} & \mathbf{51.03}_{\pm0.51} \\
                \bottomrule
            \end{tabular}
        }
        
        \label{tab:cifar100c} 
    \end{subtable}
    \label{tab:cifar100_abc}
\end{table}

\begin{table*}[t]
    \centering
    \caption{Ablation study on CIFAR10-LT and CIFAR100-LT. DGS, TLA, and EPR are evaluated across multiple metrics.}
    \resizebox{\textwidth}{!}
    {
        \begin{tabular}{c| >{$}c<{$} >{$}c<{$} >{$}c<{$}|>{$}c<{$} >{$}c<{$} >{$}c<{$} >{$}c<{$}|>{$}c<{$} >{$}c<{$} >{$}c<{$} >{$}c<{$}}
            \toprule
            \multirow{2}{*}{$\mathcal{D}_{\text{in}}$} & \multirow{2}{*}{\textbf{DGS}} & \multirow{2}{*}{\textbf{TLA}} & \multirow{2}{*}{\textbf{EPR}} & \multirow{2}{*}{$\mathbf{AUROC(\uparrow)}$} & \multirow{2}{*}{$\mathbf{AUPR(\uparrow)}$} & \multirow{2}{*}{$\mathbf{FPR95(\downarrow)}$} & \multirow{2}{*}{$\mathbf{ACC95(\uparrow)}$} & \multicolumn{4}{c}{$\mathbf{ACC@FPR}n(\uparrow)$} \\
            & & & & & & & & 0 & 0.001 & 0.01 & 0.1 \\
            \midrule
            \multirow{5}{*}{CIFAR10-LT} 
                & \times     & \times     & \times     & 82.35 & 77.84 & 53.00 & 90.24 & 72.05 & 72.11 & 72.56 & 76.21 \\
                & \checkmark & \times     & \times     & 88.37 & 83.06 & 37.89 & 91.48 & 79.18 & 79.23 & 79.66 & 83.46 \\
                & \times     & \checkmark & \times     & 81.96 & 76.00 & 51.21 & 89.56 & 72.49 & 72.56 & 73.00 & 76.93 \\
                & \checkmark & \checkmark & \times     & \underline{90.99} & \underline{88.62} & \underline{33.56} & \underline{93.15} & \underline{84.34} & \underline{84.38} & \underline{84.78} & \underline{88.02} \\
                \rowcolor{lightgray} \cellcolor{white}
                & \checkmark & \checkmark & \checkmark & \mathbf{91.65} & \mathbf{89.81} & \mathbf{31.35} & \mathbf{93.23} & \mathbf{84.80} & \mathbf{84.87} & \mathbf{85.37} & \mathbf{88.51} \\
            \midrule
            \multirow{5}{*}{CIFAR100-LT} 
                & \times     & \times     & \times     & 71.52 & 67.06 & 67.41 & 68.97 & 38.56 & 38.60 & 38.89 & 41.50 \\
                & \checkmark & \times     & \times     & 73.59 & 67.88 & 64.95 & 77.43 & 46.73 & 46.78 & 47.09 & 50.22 \\
                & \times     & \checkmark & \times     & 69.35 & 64.08 & 71.58 & 71.10 & 40.07 & 40.10 & 40.39 & 43.23 \\
                & \checkmark & \checkmark & \times     & \underline{75.84} & \underline{71.70} & \underline{64.58} & \underline{77.25} & \underline{50.70} & \underline{50.76} & \underline{51.07} & \underline{54.13} \\
                \rowcolor{lightgray} \cellcolor{white}
                & \checkmark & \checkmark & \checkmark & \mathbf{77.05} & \mathbf{73.28} & \mathbf{61.99} & \mathbf{77.66} & \mathbf{51.03} & \mathbf{51.08} & \mathbf{51.43} & \mathbf{54.69} \\
            \bottomrule
        \end{tabular}
    }
    \label{tab:ablation_loss}
\end{table*}

\begin{table}[htbp]
    \centering
    \caption{Ablation study of post-hoc OOD method and LTR method on CIFAR100-LT using ResNet18.}
    \newcolumntype{Y}{>{\centering\arraybackslash}X}
    \setlength{\tabcolsep}{3pt} 
    \small 
    \begin{tabularx}{\columnwidth}{cc|YYYY}
        \toprule
        \multirow{2}{*}{\makecell[c]{\textbf{Post-hoc} \\ \textbf{OOD Method}}} & 
        \multirow{2}{*}{\makecell[c]{\textbf{LTR} \\ \textbf{Method}}} & 
        \textbf{AUROC} & \textbf{AUPR} & \textbf{FPR95} & \textbf{ACC} \\
        & & \textbf{($\uparrow$)} & \textbf{($\uparrow$)} & \textbf{($\downarrow$)} & \textbf{($\uparrow$)} \\
        \midrule
        \multirow{4}{*}{Energy} & None & \underline{71.53} & \textbf{66.71} & 67.65 & 38.56 \\
        & Re-weighting & 67.77 & 62.19 & 73.52 & 41.76 \\
        & $\tau$-norm & \textbf{72.24} & \underline{66.17} & \textbf{65.98} & \underline{47.44} \\
        \rowcolor{lightgray} \cellcolor{white}
        & Ours & 70.29 & 64.14 & \underline{67.34} & \textbf{50.71}  \\
        \midrule
        \multirow{4}{*}{ODIN} & None & 71.52 & 67.06 & 67.41 & 38.56 \\
        & Re-weighting & 73.81 & 70.33 & 68.93 & 41.79 \\
        & $\tau$-norm & \underline{75.22} & \underline{71.21} & \underline{63.85} & \underline{47.44} \\
        \rowcolor{lightgray} \cellcolor{white}
        & Ours & \textbf{77.05} & \textbf{73.28} & \textbf{61.99} & \textbf{51.03}  \\
        \bottomrule
    \end{tabularx}
    \label{tab:ablation_ood_ltr}
\end{table}

\subsection{Ablation Study}
\textbf{Analysis of Key Modules.}
To verify the effectiveness of the core components in VMF-GOS, we conduct a comprehensive ablation study on CIFAR10-LT and CIFAR100-LT. Table \ref{tab:ablation_loss} evaluates the individual and joint contributions of DGS, TLA, and EPR. We can infer that
\begin{itemize}
    \item DGS is the primary driver of performance gains. On CIFAR10-LT, DGS elevates the AUROC to 88.37\% and reduces the FPR95 by 15.11\%. Directionally synthesizing virtual outliers provides high-quality negative constraints to effectively tighten ID decision boundaries.
    \item TLA explicitly rectifies long-tailed prior bias. By compensating for class imbalance, TLA ensures high-confidence classification while mitigating classifier favoritism. It significantly improves the discriminative power for tail classes, as evidenced by consistent gains in ACC@FPR$n$ metrics.
    \item EPR enhances the separability of energy distributions: By imposing explicit contrastive constraints, EPR widens the energy gap between ID data and GOS data, effectively suppressing overlap in energy scoring.
\end{itemize}
The joint optimization of all modules achieves optimal robustness: DGS focuses on feature-level boundary tightening, TLA addresses decision-layer bias, and EPR optimizes energy-based metrics. Their synergy enables VMF-GOS to reach superior performance in long-tailed OOD detection.

\begin{figure}[t!]
  \centering 
  \begin{subfigure}[b]{0.499\columnwidth}
    \includegraphics[width=\linewidth]{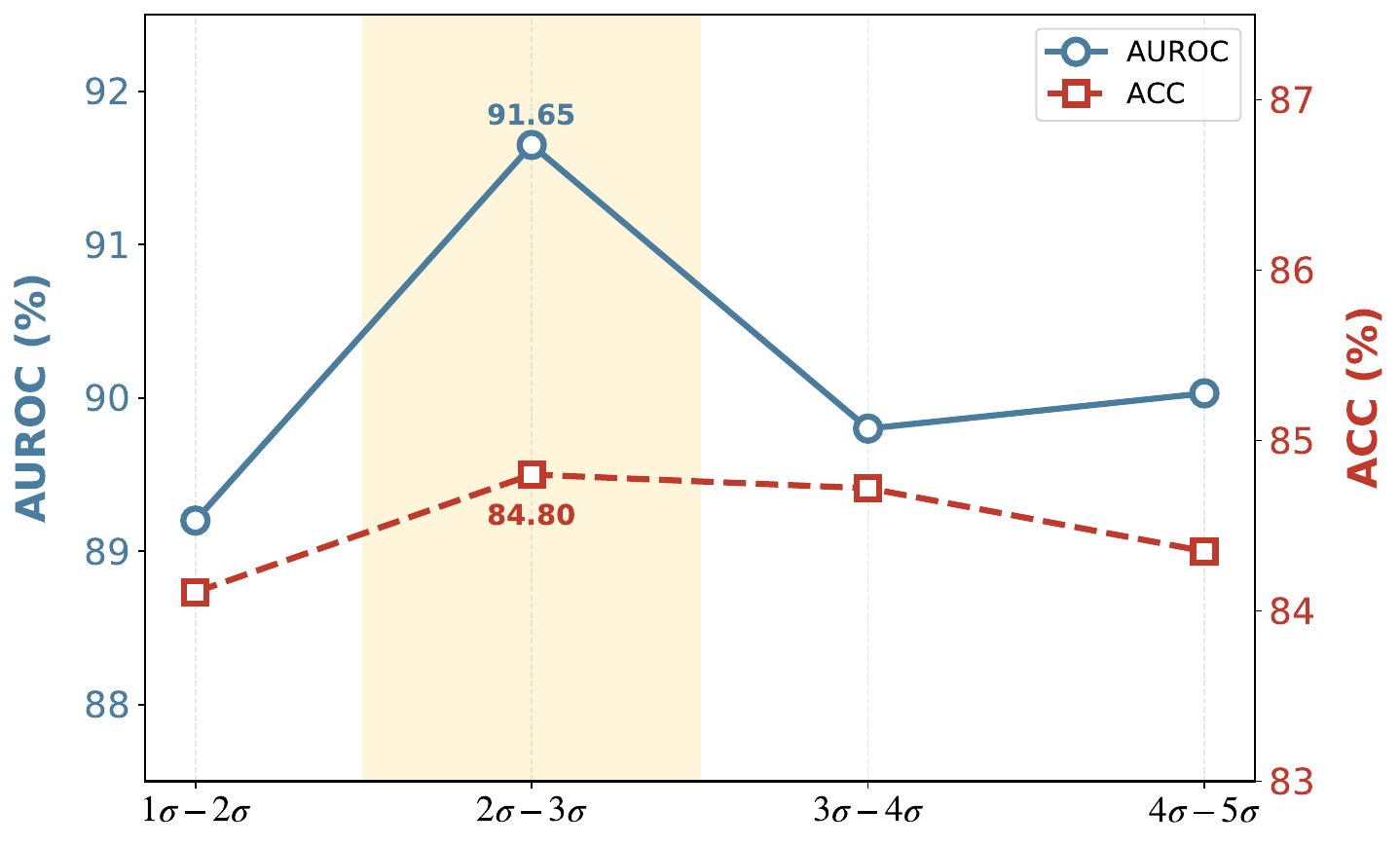}
    \caption{CIFAR-10}
    \label{fig:cifar10_sample}
  \end{subfigure}%
  \hfill 
  \begin{subfigure}[b]{0.499\columnwidth} 
    \includegraphics[width=\linewidth]{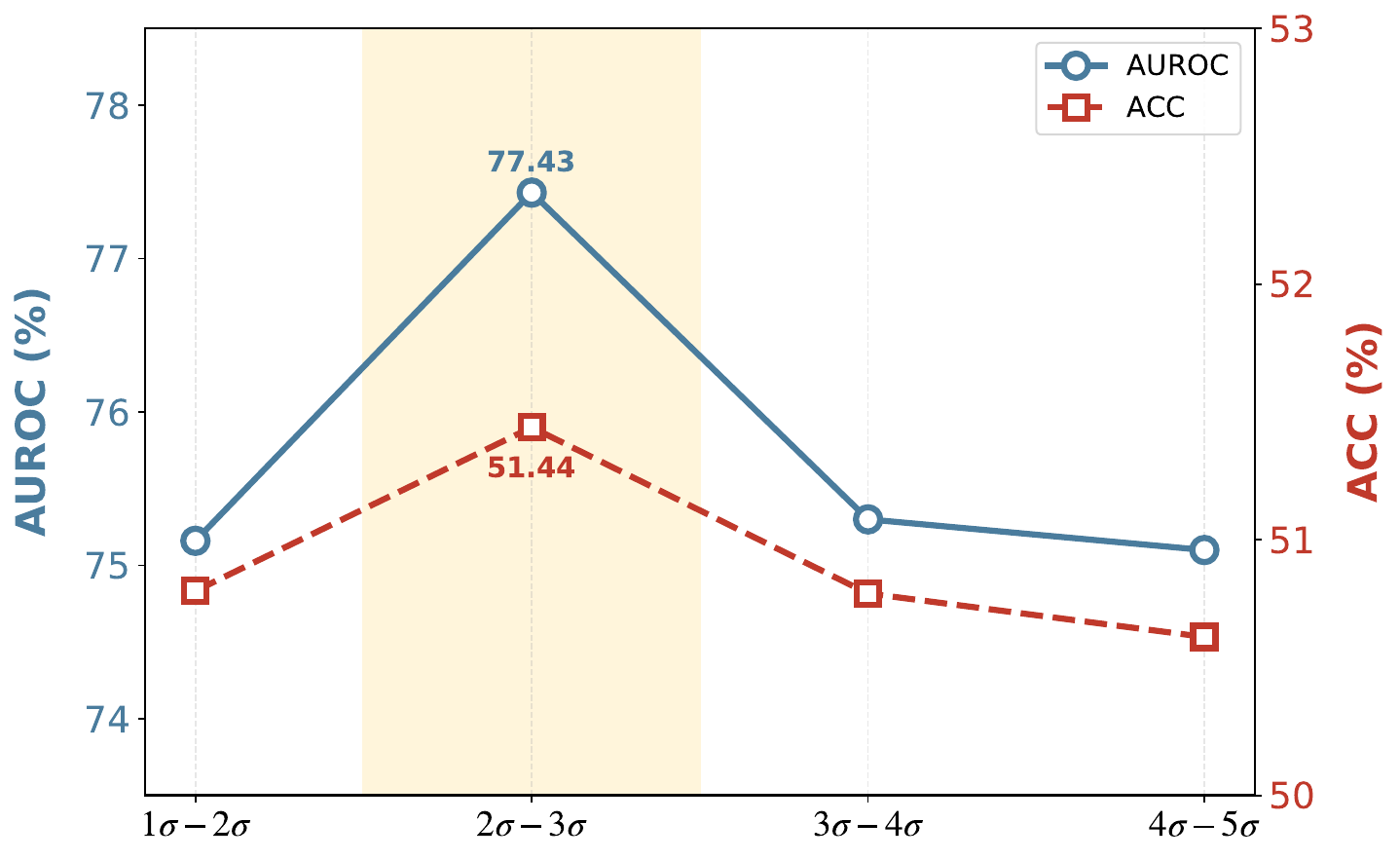}
    \caption{CIFAR-100}
    \label{fig:cifar100_sample}
  \end{subfigure}
  \caption{\textbf{Sensitivity analysis of sampling interval ($\sigma$ range) in the GOS strategy.} Results are reported on CIFAR-10 (left) and CIFAR-100 (right). We evaluate the trends of AUROC (blue solid line) and Accuracy (red dashed line) with respect to different standard deviation ranges of the $\chi^{2}$ distribution. The shaded region indicates our optimal sampling interval of $2\sigma - 3\sigma$.}
  \label{fig:ablation_sample}
\end{figure}
\begin{figure}[t!]
  \centering 
  \begin{subfigure}[b]{0.499\columnwidth}
    \includegraphics[width=\linewidth]{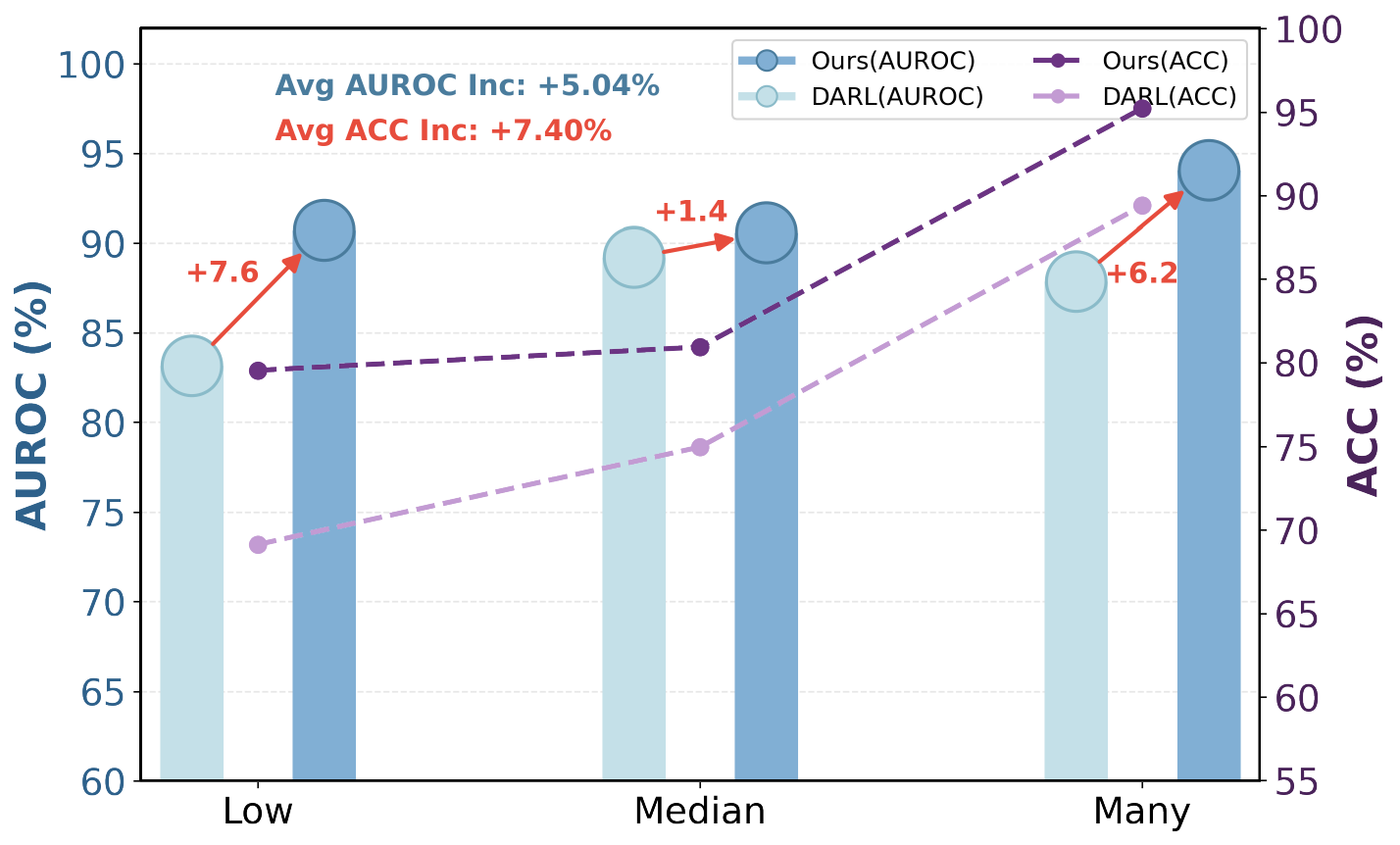}
    \caption{CIFAR-10}
    \label{fig:cifar10_headtail}
  \end{subfigure}%
  \hfill 
  \begin{subfigure}[b]{0.499\columnwidth} 
    \includegraphics[width=\linewidth]{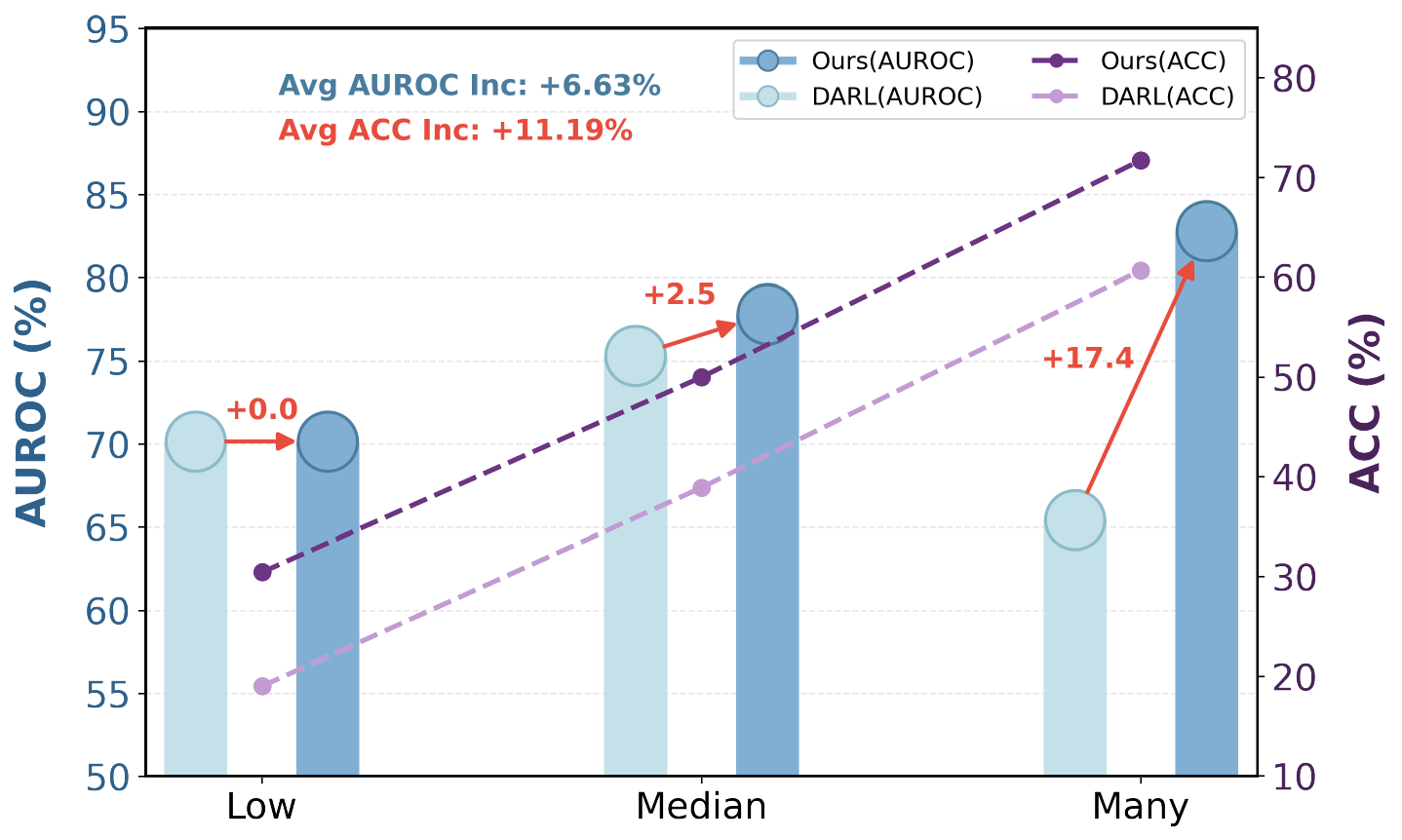}
    \caption{CIFAR-100}
    \label{fig:cifar100_headtail}
  \end{subfigure}
  \caption{\textbf{Performance across class frequencies on long-tailed CIFAR-10 (a) and CIFAR-100 (b).} Classes are categorized into Low, Median, and Many shots based on their sample cardinality. We compare our method against the DARL in terms of AUROC and Accuracy.}
  \label{fig:ablation_headtail}
\end{figure}
\noindent\textbf{Sensitivity Analysis of GOS Sampling Interval.}
We investigate the performance dynamics under different sampling displacements $\xi$, which are parameterized by the standard deviation $\sigma$ of the $\chi^2$ distribution. As illustrated in Figure~\ref{fig:ablation_sample}, the experimental results on both CIFAR-10 and CIFAR-100 datasets strongly validate our selection of $[\mu_{\chi^{2}} + 2\sigma_{\chi^{2}}, \mu_{\chi^{2}} + 3\sigma_{\chi^{2}}]$ as the optimal sampling interval. This consistently superior performance across datasets of varying complexity underscores the robust generalization and the effective boundary refinement capabilities of the GOS strategy.
\newline
\textbf{Performance Breakdown by Class Frequency.}
As illustrated in Figure~\ref{fig:ablation_headtail}, we analyze the performance across Low, Median, and Many-shot categories. It can be seen that our framework consistently outperforms the DARL across the entire frequency spectrum. Notably, we achieve substantial AUROC gains in tail classes while concurrently enhancing the discriminative robustness of head classes (e.g., a $+17.4\%$ improvement in the Many category for CIFAR-100). 
\\
\textbf{Post-hoc OOD Method and LTR Method.}
Table \ref{tab:ablation_ood_ltr} evaluates the compatibility of VMF-GOS with various LTR strategies and post-hoc OOD scoring functions. While our method already achieves competitive results under Energy-based scoring, its combination with ODIN yields a significant performance leap. We attribute this gain to the fact that EPR explicitly shapes a polarized energy landscape, which effectively amplifies the impact of the gradient-based input perturbations utilized by ODIN. Furthermore, compared to specialized LTR methods like $\tau\text{-norm}$, our framework maintains superior ID accuracy while consistently outperforming them in OOD detection robustness.

\section{Conclusion}
In this paper, we propose VMF-GOS, a novel data-free framework addressing the joint challenge of long-tailed recognition and OOD detection. VMF-GOS leverages the intrinsic geometric properties of hyperspherical feature spaces to synthesize high-quality boundary outliers through GOS. Furthermore, we use EPR and TLA to simultaneously widen the discriminative energy margin and rectify class-imbalance biases. Extensive experiments show our method outperforms both existing data-free sota and real-outlier-based approaches.


\bibliography{reference}
\bibliographystyle{icml2026}
\end{document}